\documentclass[sigconf]{acmart}

\usepackage{hyperref}
\hypersetup{
    colorlinks=true,
    linkcolor=black,
    urlcolor=black,
    citecolor=black,
    breaklinks=true 
}

\def\equationautorefname~#1\null{Equation~(#1)\null}
\usepackage{breakurl}
\usepackage{color}
\usepackage{bm}
\usepackage{xspace}
\usepackage{algorithmic}
\usepackage{algorithm}
\usepackage{mdwlist}    
\usepackage{paralist} 
\usepackage{enumitem}  

\usepackage{utfsym}

\usepackage{adjustbox}
\usepackage{array}
\usepackage{booktabs}
\usepackage{multirow}

\usepackage{array,graphicx}
\usepackage{booktabs}
\usepackage{pifont}
\usepackage{color}

\usepackage{colortbl}
\definecolor{lightgray}{gray}{0.85}
\usepackage{multirow} 
\usepackage{comment}

\usepackage{fancybox} 
\usepackage{amsmath}
\usepackage{amsfonts}

\newcommand{\propose}{{\textsc{KTVGL}}\xspace}
\newcommand{\stream}{{\textsc{KTVGL-Stream}}\xspace}

\newcommand{\mypara}[1]{\vspace{0em}\noindent\textbf{#1.}}

\newcommand{\X}{\mathcal{X}}
\newcommand{\R}{\mathbb{R}}
\newcommand{\Dminus}{D_{\setminus m}}

\newtheorem{lemma}{\textsc{Lemma}}
\newtheorem{problem}{Problem}

\newcommand{\hide}[1]{}

\AtBeginDocument{%
  }

\copyrightyear{2026}
\acmYear{2026}
\setcopyright{cc}
\setcctype{by}
\acmConference[WWW '26]{Proceedings of the ACM Web Conference 2026}{April 13--17, 2026}{Dubai, United Arab Emirates}
\acmBooktitle{Proceedings of the ACM Web Conference 2026 (WWW '26), April 13--17, 2026, Dubai, United Arab Emirates}
\acmPrice{}
\acmDOI{10.1145/3774904.3792608}
\acmISBN{979-8-4007-2307-0/2026/04}

\begin{document}

\title{Interpretable Dynamic Network Modeling of Tensor Time Series via Kronecker Time-Varying Graphical Lasso}

\author{Shingo Higashiguchi}
\affiliation{%
  \institution{SANKEN, The University of Osaka}
  \city{Osaka}
  \country{Japan}
}
\email{shingo88@sanken.osaka-u.ac.jp}

\author{Koki Kawabata}
\affiliation{%
  \institution{SANKEN, The University of Osaka}
  \city{Osaka}
  \country{Japan}
}
\email{koki@sanken.osaka-u.ac.jp}

\author{Yasuko Matsubara}
\affiliation{%
  \institution{SANKEN, The University of Osaka}
  \city{Osaka}
  \country{Japan}
}
\email{yasuko@sanken.osaka-u.ac.jp}

\author{Yasushi Sakurai}
\affiliation{%
  \institution{SANKEN, The University of Osaka}
  \city{Osaka}
  \country{Japan}
}
\email{yasushi@sanken.osaka-u.ac.jp}


\begin{abstract}

With the rapid development of web services, large amounts of time series data are generated and accumulated across various domains such as finance, healthcare, and online platforms.
As such data often co-evolves with multiple variables interacting with each other, estimating the time-varying dependencies between variables (i.e., the dynamic network structure) has become crucial for accurate modeling.
However, real-world data is often represented as tensor time series with multiple modes, resulting in large, entangled networks that are hard to interpret and computationally intensive to estimate.
In this paper, we propose Kronecker Time-Varying Graphical Lasso (KTVGL), a method designed for modeling tensor time series.
Our approach estimates mode-specific dynamic networks in a Kronecker product form, thereby avoiding overly complex entangled structures and producing interpretable modeling results.
Moreover, the partitioned network structure prevents the exponential growth of computational time with data dimension.
In addition, our method can be extended to stream algorithms, making the computational time independent of the sequence length.
Experiments on synthetic data show that the proposed method achieves higher edge estimation accuracy than existing methods while requiring less computation time. 
To further demonstrate its practical value, we also present a case study using real-world data.
Our source code and datasets are available at \url{https://github.com/Higashiguchi-Shingo/KTVGL}.
\end{abstract}

\begin{CCSXML}
<ccs2012>
   <concept>
       <concept_id>10010147.10010257.10010293.10010300.10010301</concept_id>
       <concept_desc>Computing methodologies~Maximum likelihood modeling</concept_desc>
       <concept_significance>500</concept_significance>
       </concept>
 </ccs2012>
\end{CCSXML}

\ccsdesc[500]{Computing methodologies~Maximum likelihood modeling}

\keywords{Tensor time series, Graphical lasso, Network inference}

\maketitle

\section{Introduction}
\label{sec:Introduction}

In recent years, the rapid development and proliferation of web services have led to an explosive accumulation of time series data across diverse domains including finance \cite{FinanceGL,Repeat}, public health \cite{epicast}, and online platforms such as social networks \cite{GraphOverTime} and web search logs \cite{CubeCast,FluxCube}. 
Because such data reflects users’ social activities and interests, analyzing it is essential for tasks such as understanding user behavior and formulating marketing strategies. 
For instance, identifying pairs or groups of series that are positively or negatively correlated within multivariate time series can improve recommendation systems and demand forecasting, thereby facilitating decision-making processes.

Graphical-model-based methods, which infer a graph structure (i.e., network) where each node corresponds to a variable and edges represent conditional dependencies, are widely used for learning relationships from multivariate data.
Graphical Lasso \cite{GLasso} estimates a static precision matrix under sparsity constraints to capture such dependencies.
To extend this framework to multivariate time series, it was extended to Time-varying graphical lasso (TVGL) \cite{TVGL}, enabling dynamic network inference.
TVGL is notable for its ability to capture time-varying dependencies, which are common in many real-world time series data, and has inspired several extensions \cite{KernelTVGL,LTVGL}.
However, real-world data is frequently obtained in the form of tensor time series with many attributes.
For example, consider web search volume data obtained from GoogleTrends \cite{googletrend}: the data constitutes a 3-order tensor with attributes (\textit{time, keyword, country; value}).
To apply methods for multivariate time series (e.g., TVGL) to such tensor data, the tensor time series must be flattened into a higher dimensional time series.
Such transformation causes the loss of structural information within the tensor data.
As a result, the model mixes up all relationships among variables, captures spurious correlations, and yields a large, entangled network structure that reduces the interpretability of the modeling results.
Moreover, its computational time increases significantly as the number of variables in a mode increases.
Therefore, a new approach tailored for tensor time series is required.
Figure \ref{fig:figure1} illustrates the difference between our approach and existing methods.
In the right figure, the input tensor time series is flattened to create a multivariate time series before estimating the network. 
This leads to a substantial increase in computation time and reduced interpretability of the modeling results due to the large, entangled network.
On the other hand, our method estimates multiple mode-specific dynamic networks corresponding to each non-temporal mode, as shown in the left figure.
Our model disentangles the complex interactions within the tensor time series, enabling analysis of node interactions and their temporal evolution within each mode. 
Consequently, it provides users with powerful tools for gaining new insights into tensor time series.
Specifically, it enables tasks that were previously unachievable with conventional methods, such as capturing interactions focused solely on specific modes or detecting change points focused exclusively on specific modes.

In this paper, we propose Kronecker Time-Varying Graphical Lasso (\propose), a method for dynamic network inference on tensor time series.
The proposed method estimates multiple networks, each of which is a sparse dependency network of the corresponding non-temporal mode of the input tensor.
This structure avoids estimating a single massive network, thereby reducing computational time and improving the interpretability of modeling results through concise multi-network representations.
Furthermore, we impose a Kronecker structure on these networks.
Specifically, the proposed model consists of mode-specific networks, whose Kronecker product represents the dependency structure of the entire data when regarded as a multivariate time series.
This structure enables us to decompose the nonconvex optimization over multiple networks into an alternating optimization scheme of convex subproblems, leveraging the power of Kronecker product theory \cite{MatrixCookbook}.
In addition, \propose can be extended to stream algorithms, which we call \stream.
In this case, the computation time is independent of the sequence length thanks to incremental network updating.

In summary, our method has the following characteristics:
\begin{itemize}[leftmargin=10pt,nosep]
    \item \textbf{Interpretable}: 
    By estimating mode-specific networks, we prevent networks from becoming excessively large and entangled, which significantly improves the interpretability of modeling results.
    
    \item \textbf{Scalable}:
    Because the network is estimated separately for each mode, the number of nodes and computation time do not increase exponentially. 
    In our experiments, our method is up to 60.5 times faster than existing dynamic network inference methods.
    In addition, \propose can be extended to stream algorithms, making the computational time independent of the sequence length.
   
    \item \textbf{Accurate}:
    Thanks to a model architecture carefully tailored for tensor data, our method accurately captures dependencies between variables. 
    Our experiments demonstrate that our approach improves edge estimation accuracy by up to 73.5\% based on \textit{AUC-ROC}.
\end{itemize}

\begin{figure}[t]
    \centering
    \includegraphics[width=\linewidth]{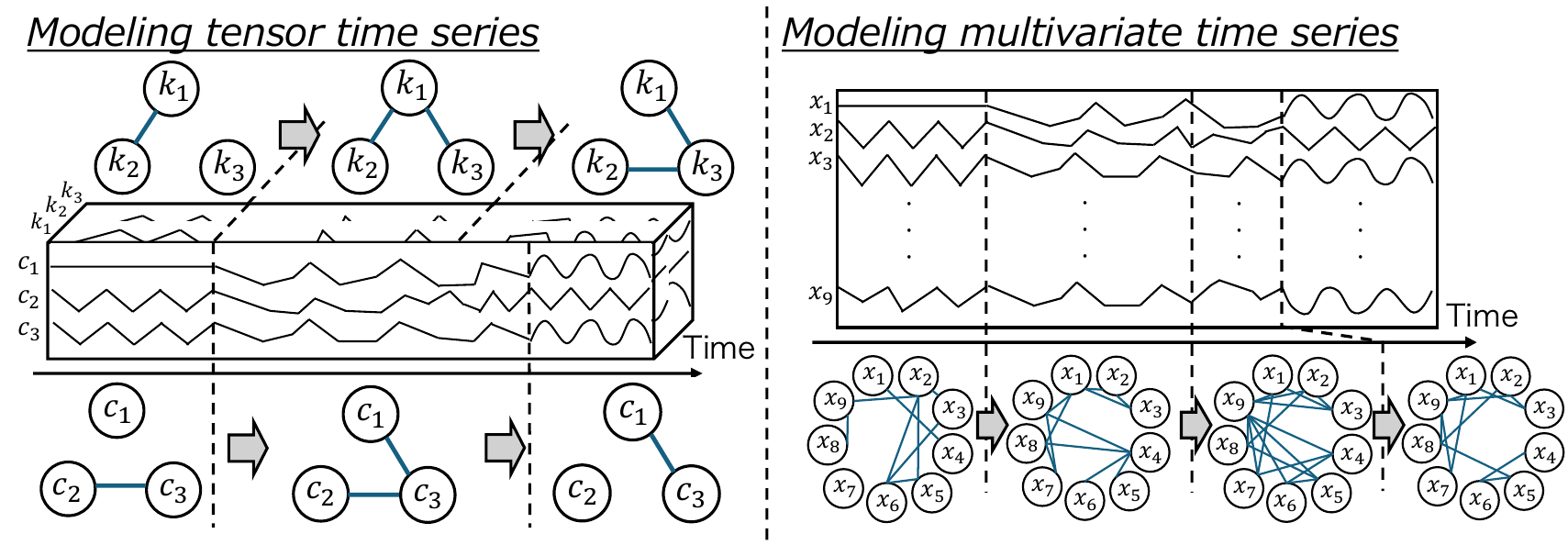}
    \caption{
    Modeling tensor time series (left) versus multivariate time series obtained by flattening tensors (right). 
    Dynamic network inference on flattened data results in large and entangled networks with reduced accuracy and interpretability, whereas estimating mode-specific networks yields clearer and more interpretable results.
    }
    \label{fig:figure1}
\end{figure}


\section{Preliminaries}
\label{sec:Preliminaries}
\mypara{Graphical Lasso}
Graphical Lasso is a statistical method for estimating a graph structure (i.e., network) representing dependencies between variables from multivariate data. 
Specifically, it estimates the Gaussian inverse covariance matrix (i.e., network) $\Theta \in \mathbb{R}^{D \times D}$, also known as the precision matrix, based on the the covariance structure of the data.
By applying $\ell_1$-regularization, it sets many elements to zero, yielding a sparse network.
This allows us to interpret pairwise conditional independence among $D$ variables, for example, if $\Theta_{ij} = 0$, variables $i$ and $j$ are conditionally independent given the values of all the other variables.
Given a series of multivariate input data, the optimization problem is written as follows:
\begin{align}
    \min_{\Theta}  -\ell(\hat{S}, \Theta) + \lambda \|  \Theta \|_{1, \text{od}}, \\
    \ell(\hat{S}, \Theta) = - \mathrm{tr}(\hat{S} \Theta) + \log\det\Theta
\end{align}
where $\Theta$ must be symmetric positive-definite ($\mathrm{S}_{++}^{p}$), $\hat{S}$ is the empirical covariance $\frac{1}{n}\sum_{i=1}^{n}x_{i}x_{i}^{\top}$, $n$ is the number of observations, $\|  \cdot \|_{1, \text{od}}$ indicates the off-diagonal $\ell_1$-norm, and $\lambda \geq 0$ is a hyperparameter for determining the sparsity level of the network. 

\mypara{Time-varying Graphical Lasso (TVGL)}
While Graphical Lasso estimates static networks, TVGL is a method for estimating dynamic networks.
Given a sequence of multivariate observations, it estimates the precision matrix $\Theta_t$ at each time point.
TVGL imposes temporal consistency constraints in addition to sparsity constraints, based on the insight that networks at adjacent time points should share similar structures.
The optimization problem is written as follows:
\begin{align}
\label{eq:obj_TVGL}
    \min_{\Theta_{1}, \dots, \Theta_{T}} \sum_{t=1}^{T} \Bigl\{ -\ell(\hat{S}_{t}, \Theta_t) + \lambda \| \Theta_{t}\|_{1,\text{od}} \Bigr\} + \rho \sum_{t=2}^{T} \psi(\Theta_{t} - \Theta_{t-1}),
\end{align}
where $\hat{S}_{t}$ is the empirical covariance at time $t$, $\psi$ is a convex penalty function that encourages similarity between $\Theta_{t-1}$ and $\Theta_t$, and $\rho \ge 0$ is a hyperparameter that determines how strongly correlated neighboring covariance estimations should be.
The dependencies between variables and their dynamics are represented by the sequence of precision matrices $\{ \Theta_1, \dots, \Theta_T \}$.
Regarding $\psi$ acting as a time-consistency constraint, various options such as the Laplacian penalty, $\ell_1$-penalty, $\ell_2$-penalty, and so on have been proposed, tailored to the domain knowledge of the target data \cite{TVGL}.
This paper employs the Laplacian penalty, which assumes the entire network undergoes gradual changes, and the $\ell_1$-penalty, which assumes only a small number of edges change; however, other choices are certainly possible.
TVGL optimization problem is convex, and convergence to the global optimum is guaranteed.
The optimal network is obtained by alternating direction method of multipliers (ADMM) \cite{ADMM}.
To infer an $d \times d$ network, the dominant computational cost arises from the eigenvalue decomposition required to update $\Theta$, with a time complexity of $O(d^3)$.
Thus, the overall computational complexity of TVGL is $O(Td^3)$.

Many existing time series modeling methods, including TVGL, assume multivariate time series as input. 
Therefore, to apply them to tensor data, the input tensor must be flattened into multivariate time series.
Such transformations not only increase computational complexity due to the increased input dimension but also cause the loss of structural information contained in the original tensor data (e.g., information about regions), significantly reducing the interpretability of the results.
When using TVGL for modeling tensor time series $\X \in \R^{T \times d_1 \times \cdots \times d_M}$, with $D = \Pi_{m} d_m$, the estimated precision matrix has dimensions $D \times D$. 
Considering the computational time is $O(d^3)$, this results in a significant increase in computational cost. 
Furthermore, the complexly entangled network structure leads to low interpretability of the modeling results.
\propose is a method tailored for modeling tensor time series data, avoiding these issues by separately estimating mode-specific networks.

\section{Proposed Method}

In this section, we describe our proposed method, \propose, for modeling tensor time series using multiple mode-specific networks.
First, we define the specific problem and related notation, then explain the model structure, followed by the optimization algorithm.
Additionally, we provide theoretical analysis of the proposed method from several perspectives and describe its extension to stream algorithms.

\subsection{Problem definition and notation}
The main symbols employed in this paper are described in Appendix \ref{appendix:notation}.
Consider a tensor time series $\X \in \R^{T \times d_1 \times \dots \times d_M}$, where $M$ is the number of non-temporal modes, the first mode represents time, and the sequence length is $T$.
We can also rewrite the tensor time series as a sequence of $M^{\text{th}}$-order tensors $\X = \{ \X_{1}, \X_{2}, \dots, \X_{T} \}$, where each $\X_t \in \R^{d_1 \times \cdots \times d_M}$ denotes the observed data at the time step $t$.
As in TVGL, our method can also be applied when multiple observations $(X_{t,1}, \ldots, X_{t,n_t})$ are obtained at the same time step, where $n_t$ denotes the number of observations at time step $t$. 
For notational simplicity, we omit the sample indices unless otherwise required.
In addition, \textit{unfolding} is the process of reordering the elements of a high-order tensor into a matrix.
We define $\text{unfold}(\X_t, m) \in \R^{d_m \times \Dminus}$ as the unfolding process along mode-$m$ of tensor $\X_t$, where $\Dminus = \Pi_{l \neq m} d_l$.
Also, we define $D = \Pi_{m=1}^{M}d_m$.
The problem we address is estimating the time-varying dependency network inherent in the data, similar to TVGL (dynamic network inference).
Here, we consider modeling tensor time series using multiple mode-specific dynamic networks represented in a Kronecker product form (dynamic multi-network inference).
Specifically, we define the network at time $t$ as $\{\Theta_{t}^{(1)}, \ldots, \Theta_{t}^{(M)}\}$ in a multi-network form. 
In this case, when the tensor time series are viewed as a flattened multivariate time series, the overall precision matrix is represented as $K_t = \otimes_{m=1}^{M} \Theta_{t}^{(m)}$.
We formally define our problem as follows.
\begin{problem}[Dynamic Multi-network Inference]
    \textbf{Given} the tensor time series $\X$,
    the goal is to \textbf{estimate} multiple dynamic dependency networks, $\{ \Theta_t \}_{t=1}^{T}$, where each $\Theta_t = \{ \Theta_{t}^{(1)}, \ldots \Theta_{t}^{(M)} \}$.
\end{problem}

\subsection{Multi-mode network}
Since tensor time-series data consists of multiple modes, 
we consider modeling tensor data using multiple networks, 
each of which is a dependency network corresponding to each non-temporal mode.
This prevents the dimensions of the precision matrix from expanding, leading to highly interpretable modeling results.
Inspired by the literature \cite{KGL}, we impose a Kronecker structure on the networks.
Specifically, the modeling result for the entire tensor data is expressed as the Kronecker product of each mode-specific precision matrix, as follows:
\begin{equation}
    \mathrm{vec}(\X_t)\ \sim\ \mathcal N\Bigr(0, (\otimes_{m=1}^M \Theta^{(m)}_t)^{-1}\Bigl)
    \label{eq:dist_vecX}
\end{equation}
This corresponds to assuming that the dependency between two observations is expressed as the product of their respective mode-specific dependencies, 
enabling effective and efficient modeling of tensor time series while leveraging the structural information inherent in the input tensor.
Thanks to this multi-network model structure, compared to standard unstructured models, the number of unknown parameters of \propose is reduced from $O(\Pi_{m=1}^{M} d_{m}^{2})$ to $O(\Sigma_{m=1}^{M} d_{m}^{2})$.
This results in a significant reduction in the computational complexity, improves edge estimation accuracy, and enhances the interpretability of modeling results. 
Our model structure serves as a tool for multifaceted data interpretation from the perspective of each mode, enabling tasks previously unachievable with conventional methods, such as capturing interactions focused solely on specific modes or detecting network change points focused exclusively on specific modes.
Similar to TVGL, each mode-specific network should be sparse and dynamic.

In short, \propose estimates time-varying sparse precision matrices for each non-temporal mode and represents the entire tensor data through its Kronecker product.
The optimization problem is described as follows:
\begin{align}
    \min_{\Theta^{(1)}, \dots , \Theta^{(M)}} \sum_{t=1}^{T} \Bigl\{ l_t(\Theta^{(1)}_t, &\dots , \Theta^{(M)}_t) + \sum_{m=1}^{M} \lambda_m \| \Theta_{t}^{(m)}\|_{1,\text{od}} \Bigr\} \nonumber \\ 
    &+ \sum_{t=2}^{T} \sum_{m=1}^{M} \rho_m \psi(\Theta_{t}^{(m)} - \Theta_{t-1}^{(m)}).
    \label{eq:obj_full}
\end{align}
Here, $l_t(\Theta^{(1)}_t, \dots , \Theta^{(M)}_t)$ is the log-likelihood of $\Theta_t$'s, 
\begin{align}
\label{eq:likelihood_full}
    l_t(\Theta^{(1)}_t, \dots , \Theta^{(M)}_t) = \mathrm{tr}(\hat{S}_{t} K_{t}) - \log\det K_{t},
\end{align}
where each $\Theta_t$ must be symmetric positive-definite ($\mathrm{S}_{++}^{p}$), $\hat{S}_{t}$ is the empirical covariance $\frac{1}{n_t}\sum_{n=1}^{n_t} \mathrm{vec}(X_{t,n}) \mathrm{vec}(X_{t,n})^T$, $n_t$ is the number of observations at time $t$, and $K_{t}$ represents the Kronecker product of all mode-specific networks, that is, $K_t = \otimes_{m=1}^{M}\Theta_{t}^{(m)}$.

\subsection{Optimization algorithm}
Here, we describe an algorithm for estimating time-varying sparse networks.
Optimization problem (\ref{eq:obj_full}) is non-convex and cannot be optimized directly.
Therefore, we propose an alternating optimization algorithm.
Specifically, we alternately optimize the network for one mode while keeping the networks for the other modes fixed.

We introduce two lemmas leading to alternating optimization.

\begin{lemma}
    \label{lamma1}
    We define
    \begin{align}
    \label{eq:ComputeS}
        \hat{S}_{t}^{(m)} = \frac{1}{n_t D_{\setminus m}} \sum_{n=1}^{n_t} A_{t}^{(m)} G_{t}^{(m)} A_{t}^{(m)\top},
    \end{align}
    where $G_{t}^{(m)} = \otimes_{l \neq m} \Theta_{t}^{(l)} \in \mathbb{R}^{D_{\setminus m} \times D_{\setminus m}}$ and $A_{t,n_t}^{(m)} = \text{unfold}(X_{t,n_t}, m) \in \mathbb{R}^{d_m \times D_{\setminus m}}$.
    Then, for any $m$, the following equation holds:
    \begin{equation}
        \mathrm{tr}(\hat{S}_{t} K_{t}) = D_{\setminus m} \mathrm{tr}(\hat{S}_{t}^{(m)} \Theta_{t}^{(m)}).
    \end{equation}
\end{lemma}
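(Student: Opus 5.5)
The plan is to reduce the trace to a sum over samples and then handle one sample at a time through the standard identity for a Kronecker quadratic form applied to a mode-$m$ unfolding.

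\textbf{Step 1 (sum over samples).} By linearity of the trace and the definition of $\hat{S}_t$,
\begin{align*}
\mathrm{tr}(\hat{S}_t K_t) = \frac{1}{n_t}\sum_{n=1}^{n_t} \mathrm{vec}(X_{t,n})^\top K_t\, \mathrm{vec}(X_{t,n}).
\end{align*}
Likewise, reading the summand in \eqref{eq:ComputeS} as $A_{t,n}^{(m)} G_t^{(m)} A_{t,n}^{(m)\top}$, the right-hand side is
\begin{align*}
D_{\setminus m}\,\mathrm{tr}(\hat{S}_t^{(m)}\Theta_t^{(m)}) = \frac{1}{n_t}\sum_{n=1}^{n_t}\mathrm{tr}\bigl(A_{t,n}^{(m)} G_t^{(m)} A_{t,n}^{(m)\top}\Theta_t^{(m)}\bigr).
\end{align*}
The factor $D_{\setminus m}$ cancels the normalization. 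It therefore suffices to prove, for a single tensor $X$ with $A=\mathrm{unfold}(X,m)$,
\begin{align*}
\mathrm{vec}(X)^\top K_t\,\mathrm{vec}(X)=\mathrm{tr}\bigl(A G_t^{(m)} A^\top \Theta_t^{(m)}\bigr).
\end{align*}

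\textbf{Step 2 (permutation).} Fix the vectorization convention so that $\mathrm{vec}(X)$ and $\mathrm{vec}(A)$ differ by a perfect-shuffle permutation $P_m$, with $\mathrm{vec}(X)=P_m\,\mathrm{vec}(A)$. Here $\mathrm{vec}(A)$ is the column-stacking of $A$, and the column ordering of the unfolding is chosen consistently with the ordering of factors in $G_t^{(m)}$.

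The standard commutation property of Kronecker products under perfect shuffles then gives
\begin{align*}
P_m^\top \bigl(\otimes_{l=1}^M\Theta_t^{(l)}\bigr) P_m = G_t^{(m)}\otimes \Theta_t^{(m)},
\end{align*}
with the mode-$m$ factor moved to the innermost (fastest-varying) position. This is the step I expect to be the main obstacle. It is pure index bookkeeping, and one must verify that the paper's unfolding ordering matches the factor ordering in $G_t^{(m)}=\otimes_{l\ne m}\Theta_t^{(l)}$.

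I would verify it entrywise. With multi-indices $(i_1,\dots,i_M)$, the entry of $K_t$ is $\prod_l [\Theta_t^{(l)}]_{i_l j_l}$. Since this product is invariant under reordering the modes, conjugating by the permutation only relabels rows and columns consistently.

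\textbf{Step 3 (apply the vec identity).} Using $\mathrm{vec}(A)^\top (B\otimes C)\,\mathrm{vec}(A)=\mathrm{tr}(A^\top C A B^\top)$ from the Matrix Cookbook, with $B=G_t^{(m)}$ and $C=\Theta_t^{(m)}$, we get
\begin{align*}
\mathrm{vec}(X)^\top K_t\,\mathrm{vec}(X)=\mathrm{tr}\bigl(A^\top\Theta_t^{(m)}A\,G_t^{(m)}\bigr)=\mathrm{tr}\bigl(AG_t^{(m)}A^\top\Theta_t^{(m)}\bigr).
\end{align*}
The last equality uses the cyclic property of the trace and the symmetry of $G_t^{(m)}$, which holds because each $\Theta_t^{(l)}$ is symmetric.

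Summing over $n$ gives the claim for every $m$.
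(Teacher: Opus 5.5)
Your proof is correct. It rests on the same Kronecker trace identity as the paper's proof but reaches it by a different mechanism.

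The paper works with the aggregated covariance rather than individual samples:
\begin{itemize}
\item it writes $K_t=\Theta_t^{(m)}\otimes G_t^{(m)}$ with mode $m$ \emph{outermost};
\item it views $K_t$ and $\hat S_t$ as $d_m\times d_m$ grids of $D_{\setminus m}\times D_{\setminus m}$ blocks;
\item it expands $\mathrm{tr}(\hat S_tK_t)=\sum_{j,k}\Theta^{(m)}_{t,\{kj\}}\,\mathrm{tr}([\hat S_t]_{jk}G_t^{(m)})$ from the diagonal blocks;
\item it identifies each $\mathrm{tr}([\hat S_t]_{jk}G_t^{(m)})$ with $D_{\setminus m}\hat S^{(m)}_{t,\{jk\}}$.
\end{itemize}
You instead reduce to per-sample quadratic forms, move mode $m$ \emph{innermost} via $P_m$, and cite $\mathrm{vec}(A)^\top(B\otimes C)\mathrm{vec}(A)=\mathrm{tr}(A^\top CAB^\top)$.

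The paper's route is self-contained and needs no vec identity. However, it silently relies on three points that your write-up makes explicit.
\begin{itemize}
\item With $\otimes_{l=1}^M$ read left to right, the equality $K_t=\Theta_t^{(m)}\otimes G_t^{(m)}$ holds literally only for $m=1$. For other $m$ it holds only after a simultaneous row/column permutation. This is harmless, since it leaves $\mathrm{tr}(\hat S_tK_t)$ unchanged, but it is never stated.
\item The column order of the unfolding must match the factor order in $G_t^{(m)}$.
\item $\mathrm{tr}([\hat S_t]_{jk}G_t^{(m)})$ is literally the $(k,j)$ entry of $\frac{1}{n_t}\sum_n A^{(m)}_{t,n}G_t^{(m)}A^{(m)\top}_{t,n}$. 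Identifying it with the $(j,k)$ entry of $D_{\setminus m}\hat S_t^{(m)}$ therefore uses the symmetry of $G_t^{(m)}$, the same fact your Step 3 invokes.
\end{itemize}
Your version costs some index bookkeeping in Step 2, but it is the more transparent of the two about conventions.
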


\begin{proof}
    Considering $K_{t} = \Theta_{t}^{(m)} \otimes G_{t}^{(m)} \in \mathbb{R}^{D \times D}$ as a $d_m \times d_m$ block matrix along mode $m$, we have $\lbrack K_t \rbrack _{jk} = \Theta_{t,\lbrace jk\rbrace }^{(m)} G_{t}^{(m)} \in  \mathbb{R}^{D_{\setminus m} \times D_{\setminus m}}$.
    
    Here, $\lbrack \hat{S}_{t} K_t \rbrack _{jj} = \sum_{k=1}^{d_m} \lbrack \hat{S}_{t} \rbrack _{jk} \lbrack K_t \rbrack _{kj} = \sum_{k=1}^{d_m} \lbrack \hat{S}_{t} \rbrack _{jk} \Theta_{t, \lbrace kj\rbrace}^{(m)} G_{t}^{(m)}$.
    Therefore, the following equality holds:
    \begin{align}
        \mathrm{tr}(\hat{S}_{t} K_{t}) &= \sum_{j=1}^{d_m} \mathrm{tr}(\lbrack \hat{S}_{t} K_t \rbrack _{jj}) = \sum_{j=1}^{d_m} \sum_{k=1}^{d_m} \Theta_{t, \lbrace kj\rbrace}^{(m)} \mathrm{tr}(\lbrack \hat{S}_{t} \rbrack _{jk} G_{t}^{(m)}) \nonumber \\
        &= \Dminus \sum_{j=1}^{d_m} \sum_{k=1}^{d_m} \Theta_{t, \lbrace kj\rbrace}^{(m)} \hat{S}_{t, \lbrace jk\rbrace}^{(m)} = \Dminus \mathrm{tr}(\hat{S}_{t}^{(m)} \Theta_{t}^{(m)})
    \end{align}
\end{proof}

$\hat{S}_{t}^{(m)}$ can be interpreted as an empirical covariance matrix that aggregates variability along mode $m$, after canceling out the influence of the other modes through whitening with their corresponding precision matrices.
By using this statistic, $\hat{S}_t$ in Eq.(\ref{eq:likelihood_full}) is reduced to a quadratic form with respect to mode $m$, allowing the optimization to be formulated as an alternating procedure: solving TVGL for mode-$m$ while keeping the precision matrices of the other modes fixed, as shown in Eq.(\ref{eq:obj_mode}).
This formulation is made possible by the proposed model structure, which represents the overall network as the Kronecker product of mode-specific networks, thereby decoupling the optimization across modes.
Here, based on the relationship between the Kronecker product and \textit{unfolding} process \cite{Kolda}, and the fact that each $\Theta_{t}^{(m)}$ is a symmetric matrix, the calculation of $\hat{S}_{t}^{(m)}$ in Eq. (\ref{eq:ComputeS}) can be streamlined as follows:
\begin{align}
\label{eq:ComputeS_Y}
    \hat{S}_{t}^{(m)} = \frac{1}{n_t D_{\setminus m}} \sum_{n=1}^{n_t} A_{t}^{(m)} G_{t}^{(m)} A_{t}^{(m)\top} = \frac{1}{n_t D_{\setminus m}} \sum_{n=1}^{n_t} Y_{t}^{(m)} A_{t}^{(m)\top},
\end{align}
where $Y_{t}^{(m)} = \text{unfold}(\X_t \times_{l \neq m} \Theta_{t}^{(l)}, m)$ and $\times_{l}$ denote the mode-$l$ product.
The right-hand side shows that $\hat{S}_{t}^{(m)}$ can be calculated efficiently, avoiding the large intermediate matrix generated by the Kronecker product (i.e., $G_{t}^{(m)}$).

In addition, the following transformation also holds for the second term of Eq. (\ref{eq:likelihood_full}).
\begin{lemma}
    \label{lemma2}
    $\log\det K_t = \sum_{m=1}^{M} D_{\setminus m} \log\det \Theta_{t}^{(m)}$ holds.
\end{lemma}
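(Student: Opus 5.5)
The identity will follow from the determinant formula for a two-factor Kronecker product, extended to $M$ factors by induction. The base fact is that for $A \in \R^{p\times p}$ and $B \in \R^{q\times q}$,
\[
\det(A\otimes B) = (\det A)^{q} (\det B)^{p}.
\]
To justify it, write $A\otimes B = (A\otimes I_q)(I_p\otimes B)$ using the mixed-product property. The factor $I_p\otimes B$ is block diagonal with $p$ copies of $B$, so its determinant is $(\det B)^p$. The factor $A\otimes I_q$ is permutation-similar to $I_q\otimes A$; the permutation is the perfect shuffle, i.e.\ the commutation matrix $P$ with $P^{-1} = P^{\top}$. Hence its determinant is $(\det A)^q$.

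An alternative route, valid since every $\Theta_t^{(m)}$ is symmetric positive definite, uses eigendecompositions $\Theta_t^{(m)} = U_m \Lambda_m U_m^\top$. By the mixed-product property, $K_t = (\otimes_m U_m)(\otimes_m \Lambda_m)(\otimes_m U_m)^\top$, and $\otimes_m U_m$ is orthogonal. So the eigenvalues of $K_t$ are all products $\prod_m \lambda^{(m)}_{i_m}$ over multi-indices $(i_1,\dots,i_M)$. I would use this spectral argument because it handles all $M$ factors at once without induction.

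Taking logs, and noting that positivity of all eigenvalues makes every $\log$ well defined, gives
\[
\log\det K_t = \sum_{i_1,\dots,i_M} \sum_{m=1}^M \log \lambda^{(m)}_{i_m}.
\]
Now fix $m$ and $i_m$. The term $\log\lambda^{(m)}_{i_m}$ appears once for each choice of the remaining indices $(i_l)_{l\neq m}$, which is $\prod_{l\neq m} d_l = D_{\setminus m}$ times. Summing over $i_m$ then gives $D_{\setminus m}\sum_{i_m}\log\lambda^{(m)}_{i_m} = D_{\setminus m}\log\det\Theta_t^{(m)}$. Summing over $m$ yields the claim.

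The only step needing care is this counting: interchanging the order of summation and confirming that the multiplicity is exactly $D_{\setminus m}$ rather than, say, $D/d_m$ with some ordering subtlety. Since $D/d_m = D_{\setminus m}$ by definition, this is immediate. Everything else is standard Kronecker-product algebra, as in \cite{MatrixCookbook}.
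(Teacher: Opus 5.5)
Your proof is correct, but the route you settle on differs from the paper's. The paper cites the two-factor identity $\det(A\otimes B) = (\det A)^{b}(\det B)^{a}$ from the Matrix Cookbook. It applies this recursively, peeling off one factor at a time via $K_t = \Theta_t^{(1)}\otimes\bigl(\otimes_{l\ge 2}\Theta_t^{(l)}\bigr)$ and so on, to obtain $\det K_t = \prod_{m}(\det\Theta_t^{(m)})^{D_{\setminus m}}$, and then takes logarithms. This is exactly the first route you sketched, and you additionally prove the two-factor identity, which the paper only cites.

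Your spectral argument instead diagonalizes all factors at once, $K_t = (\otimes_m U_m)(\otimes_m\Lambda_m)(\otimes_m U_m)^{\top}$. The exponent $D_{\setminus m}$ then arises as a direct multiplicity count rather than being carried through an induction. As a by-product, you also see that $K_t$ is itself positive definite, which the determinant identity alone does not show. The price is that you rely on symmetry of the factors. That is harmless here, since each $\Theta_t^{(m)}$ is constrained to be positive definite, and in general it could be removed via Schur triangularization. The determinant identity, by contrast, holds for arbitrary square matrices.
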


\begin{proof}
    From \cite{MatrixCookbook}, for square matrices $A \in \mathbb{R}^{a \times a}$ and $B \in \mathbb{R}^{b \times b}$, $\det (A \otimes B) = (\det A)^b (\det B)^a$ holds.
    Applying this recursively yields $\det K_t = \Pi_{m=1}^{M} (\det \Theta_{t}^{(m)})^{D_{\setminus m}}$.
\end{proof}

From the above two lemmas, optimization problem (\ref{eq:obj_full}) can be decomposed into the following subproblems for mode $m = 1, \dots, M$, with the parameters of the other modes held fixed:
\begin{align}
\label{eq:obj_mode}
    \min_{\Theta^{(m)}} \sum_{t=1}^{T} \Bigl\{ \mathrm{tr}(\hat{S}_{t}^{(m)} \Theta_{t}^{(m)}) &- \log\det \Theta_{t}^{(m)} + \frac{\lambda_m}{D_{\setminus m}} \| \Theta_{t}^{(m)}\|_{1,\text{od}}\Bigr\} \nonumber \\
    &+ \frac{\rho_m}{D_{\setminus m}} \sum_{t=2}^{T}  \psi(\Theta_{t}^{(m)} - \Theta_{t-1}^{(m)}).
\end{align}
This is equivalent to the TVGL problem described in Section \ref{sec:Preliminaries} and can be optimized using the TVGL solver.
By iteratively repeating the optimization of the subproblem for all modes, the entire network can be estimated.
Algorithm \ref{alg:ktvgl} shows the overall procedure (i.e., outer loop) of \propose .

\begin{algorithm}[t]
    \small
    \caption{Kronecker Time-Varying Graphical Lasso ($\X $)}
    \label{alg:ktvgl}
\begin{algorithmic}[1]
    \REQUIRE
        Tensor time series $\X$

    \ENSURE
        Dynamic multi-network $\{ \Theta_t \}_{t=1}^{T}$, 
        where $ \Theta_t = \{ \Theta_{t}^{(m)}\}_{m=1}^{M}$ \\
\STATE $\{ \Theta_1 , \dots \Theta_T \} = \textrm{Initialization}(\X)$;
\REPEAT
    \FOR{$m = 1, \ldots, M$}
    \STATE $\{ \hat{S}_{t}^{(m)}\}_{t=1}^{T} \leftarrow \frac{1}{n_t D_{\setminus m}} \sum_{n=1}^{n_t} Y_{t}^{(m)} A_{t}^{(m)\top}$;  /* Eq. (\ref{eq:ComputeS_Y}) */
    \STATE $\{\Theta_{t}^{(m)}\}_{t=1}^{T} \leftarrow \text{TVGL\_solver}(\hat{S}_{t}^{(m)}, \frac{\lambda_m}{\Dminus}, \frac{\rho_m}{\Dminus})$;  /* Eq. (\ref{eq:obj_mode}) */
    \ENDFOR
\UNTIL{convergence;}
\STATE {\bf return} $\{ \Theta_t \}_{t=1}^{T}$;

\end{algorithmic}
\end{algorithm}

\subsubsection{Empirical covariance}
\label{subsubsec:empirical}
In the optimization of network, the log-likelihood (i.e., $\ell(\cdot)$ in Eq. (\ref{eq:obj_TVGL}) and (\ref{eq:likelihood_full})) depends on empirical covariance (i.e., $\hat{S}_{t}$ in Eq. (\ref{eq:obj_TVGL}) and (\ref{eq:likelihood_full})).
$\hat{S}_{t}$ is calculated from the observations obtained at time $t$, but it is quite common for only one observation to be obtained at a single time point.
In such cases, the empirical covariance of TVGL becomes rank-1.
That is, a large covariance matrix must be estimated from just one observation, making the estimation prone to instability.
In contrast, the $\hat{S}_{t}^{(m)}$ computed in Eq. (\ref{eq:ComputeS_Y}) satisfies $\text{rank}(\hat{S}_{t}^{(m)}) \leq \min\{d_m, \Dminus \}$ and can be full-rank, leading to more stable covariance estimation.
This can be interpreted as a result of leveraging the rich structural information contained within the tensor data.
For example, consider the web search volume data with three modes (\textit{time, keyword, country}) discussed in Section \ref{sec:Introduction}.
This corresponds to aggregating information from multiple countries to estimate the network between keywords, and aggregating information from multiple keywords to estimate the network between countries.
In contrast, flattening the data is equivalent to observing only one massive vector of dimension \textit{keyword} $\times$ \textit{country}, causing the empirical covariance to remain rank-1.
This difference significantly impacts the accuracy of modeling, as shown in Section \ref{sec:experiments}.

\subsubsection{Scalability}
\label{subsubsec:scalability}
According to \cite{TVGL}, the overall computational cost of TVGL is $O(Td^3)$.
Therefore, considering that our algorithm can be reduced to alternating optimization of the TVGL subproblem, the computational time for \propose is $O(\#iter \cdot T \Sigma_{m=1}^{M}d_m^{3})$.
Since $\#iter$ is a negligibly small constant value, the total time complexity is $O(T  \Sigma_{m=1}^{M}d_m^{3})$.
On the other hand, when applying TVGL to a flattened tensor, the computational complexity is $O(T\Pi_{m=1}^{M}d_{m}^{3})$. 
In tensor data, the product term $\prod_{m} d_m$ tends to be large, leading to substantial computational cost.
Our method avoids the exponential growth in computation time by leveraging a partitioned multi-network structure.

\subsubsection{Extension to stream processing}
\label{subsubsec:stream}
In scenarios where new data is constantly being observed, it is desirable to update the network incrementally as new data arrives \cite{MAST,RegimeCast}.
Our method can be extended to a stream algorithm using the sliding window approach.
We refer to the extended algorithm as \stream.
Specifically, with window size $w$, dynamic network estimation is performed on the most recent $w$ time steps of data whenever new data is observed.
Here, for the first $w-1$'s $\Theta_t$ values within the window, ADMM updates are performed using the estimated values from the previous window as initial values.
By starting the estimation from reliable initial values, the convergence time for each window is significantly reduced, as shown in Section \ref{sec:experiments}.
Thanks to the sliding window approach, the update time of this algorithm does not depend on the data length, which is an essential property for stream processing.

\section{Experiments}
\label{sec:experiments}
In this section, we run several experiments of \propose on synthetic data, where there are clear ground truth networks.
The experiments were designed to answer the following questions about \propose.
\begin{itemize}
    \item [Q1.] How accurately does it estimate the true network?
    \item[Q2.] How accurately does it detect network change points?
    \item[Q3.] How does it scale in terms of computational time? 
\end{itemize}
Additionally, we evaluate the performance of \stream in streaming scenarios.

\subsection{Synthetic datasets}
We randomly generate synthetic 3rd or 4th-order tensor time series (i.e., $M=2$ or $3$), $\X$, which follow a multivariate normal distribution $\mathrm{vec}(\X_t)\ \sim\ \mathcal N\Bigr(0, (\otimes_{m=1}^M \Theta^{(m)}_t)^{-1}\Bigl)$.
We set the sequence length $T=300$.
For each non-temporal mode, we vary the network structure (i.e., edge positions and counts) at either  $t = 100, 200$,  $t = 100, 250$ or $t = 150, 250$.
The resulting dataset includes both cases where network changes occur at different timings per mode and cases where all networks change simultaneously.
We created multiple datasets while varying the dimensionality per mode.
The generation process for the ground truth network in each mode is based on Erdős-Rényi graph \cite{ErdosRenyi}, specifically as follows:
\begin{enumerate}
    \item [(1)] For $m = 1, \ldots, M$ and $t = 1, \ldots, T$, set $\Theta_{t}^{(m)} \in \R^{d_m \times d_m}$ equal to the adjacency matrix of an Erdos-Renyi directed random graph, where every edge has a $25\%$ chance of being selected.
    \item [(2)] For every selected edge, set the value of that edge to $\sim\ \text{Uniform}([-3.0, 3.0])$. We enforce a symmetry constraint on the network.
    \item [(3)] Let $c$ be the smallest eigenvalue of $\Theta_{t}^{(m)}$, and set $\Theta_{t}^{(m)} = \Theta_{t}^{(m)} + (0.1 + |c|)I$, where $I$ is an identity matrix. This ensures that $\Theta_{t}^{(m)}$ is invertible.
\end{enumerate}
The precision matrix for the entire tensor time series is given by the Kronecker product $K_t = \otimes_{m=1}^{M} \Theta_{t}^{(m)}$.
At each $t$, we observe one sample from true distributions.

\subsection{Experimental setting}
\subsubsection{Evaluation metrics}
We introduce three metrics to measure the accuracy of estimating the true network structure.
\begin{itemize}
    \item \textit{AUC-ROC}: Based on the relationship between the true positive rate and the false positive rate.
    \item \textit{AUC-PR}: Based on precision and recall, and particularly informative in imbalanced settings.
    \item \textit{Best-$F_1$}: The maximum $F_1$ score across all thresholds, reflecting the best trade-off between precision and recall.
\end{itemize}
In addition, based on \cite{TVGL}, we introduce Temporal Deviation Ratio (TDR) to measure how accurately each method could detect the change points in the network structure.
\begin{itemize}
    \item \textit{Temporal Deviation Ratio (TDR)}: We define the temporal deviation as $\| \Theta_{t} - \Theta_{t-1} \|_{F} / \| \Theta_{t} \|_{F}$.
     It indicates how much the network structure changed at each time step.
     TDR is the ratio of the temporal deviation at change points to the average temporal deviation value across all the timestamps.
\end{itemize}

\subsubsection{Baselines}
We compare our approach to two different baselines: the static Kronecker Graphical Lasso (Static KGL) and Time-Varying Graphical Lasso (TVGL).
For the static Kronecker Graphical Lasso, we estimate a static network for each mode that represents the entire dataset.
Additionally, we will perform performance evaluations on \stream.
Hyperparameter settings are described in the Appendix \ref{appendix:hyperparameter}.

\subsection{Results}
\subsubsection{Q1: Accuracy of edge estimation}

\begin{table}[!t]
    \centering
    \caption{Performance comparison using three metrics: AUC-ROC, AUC-PR, Best $F_1$. T.O. denotes timeout under a 5000 seconds limit.}
    \scalebox{0.7}{
    \begin{tabular}{c|c|ccc|ccc|ccc}
        \toprule
         \multicolumn{2}{r|}{Method}& \multicolumn{3}{c|}{{\small \propose }} & \multicolumn{3}{c|}{{\small TVGL}} & \multicolumn{3}{c}{{\small Static KGL}} \\ 
         
        \cmidrule(l{0mm}r{0mm}){1-2}
        \cmidrule(l{1.0mm}r{1.0mm}){3-5}
        \cmidrule(l{1.0mm}r{1.0mm}){6-8}
        \cmidrule(l{1.0mm}r{0mm}){9-11}
        
        $M$ & $d_m$ & {\footnotesize AUCROC} & {\footnotesize AUCPR} & {\footnotesize Best$F_1$} & {\footnotesize AUCROC} & {\footnotesize AUCPR} & {\footnotesize Best$F_1$} & {\footnotesize AUCROC} & {\footnotesize AUCPR} & {\footnotesize Best$F_1$}  \\ 
        \midrule
        
        2 & 3 & \textbf{0.949} & \textbf{0.834} & \textbf{0.759} & $0.779$ & $0.639$ & $0.580$ & $0.892$ & $0.464$ & $0.579$ \\
        
        & 5 & \textbf{0.946} & \textbf{0.837} & \textbf{0.759} & $0.743$ & $0.501$ & $0.494$ & $0.831$ & $0.374$ & $0.468$ \\
        
        & 10 & \textbf{0.955} & \textbf{0.798} & \textbf{0.713} & $0.687$ & $0.355$ & $0.384$ & $0.824$ & $0.347$ & $0.399$ \\

        & 15 & \textbf{0.944} & \textbf{0.721} & \textbf{0.648} & 0.544 & 0.147 & 0.166 & 0.763 & 0.274 & 0.337 \\
        \midrule
        
        3 & 3 & \textbf{0.981} & \textbf{0.875} & \textbf{0.800} & 0.761 & 0.538 & 0.520 & 0.838 & 0.364 & 0.445 \\
        
        & 5 & \textbf{0.988} & \textbf{0.895} & \textbf{0.804} & 0.644 & 0.340 & 0.370 & 0.909 & 0.329 & 0.401\\

        & 10 & \textbf{0.992} & \textbf{0.837} & \textbf{0.754} & T.O. & T.O. & T.O. & 0.843 & 0.219 & 0.290 \\
        
        & 15 & \textbf{0.985} & \textbf{0.699} & \textbf{0.735} & T.O. & T.O. & T.O. & 0.890 & 0.190 & 0.262 \\
        \bottomrule
    \end{tabular}
    }

    \label{tab:accuracy}
\end{table}
Table \ref{tab:accuracy} shows the accuracy of edge estimation for each method.
Thanks to our model structure and algorithm tailored for tensor data, \propose achieves the highest accuracy.
We conducted experiments under a scenario natural for time series analysis, where one observation is obtained at each time step.
In this case, as discussed in Section \ref{subsubsec:empirical}, TVGL calculates the empirical covariance matrix $\hat{S}_{t}$ from a single observation, resulting in $\hat{S}_t$ being rank-1 and leading to low accuracy.
However, KTVGL leverages the structural information of tensor data to compute a full-rank empirical covariance matrix, significantly improving accuracy.
Static KGL can effectively estimate mode-specific networks through its multi-network structure, but its accuracy decreases because it cannot capture temporal changes in the network.
All elements of our method (i.e., the multi-network structure expressed as a Kronecker product and its dynamic estimation—significantly) contribute to improved accuracy.
Overall, our Kronecker-structured dynamic model substantially improves the accuracy of network inference for tensor time series.

\subsubsection{Q2: Accuracy of change point detection}

\begin{table}[!t]
    \centering
    \caption{Performance comparison using Temporal Deviation Ratio (TDR). T.O. denotes timeout under a 5000 seconds limit.}
    \scalebox{0.7}{
    \begin{tabular}{c|cccc|cccc}
        \toprule
        $M$ & \multicolumn{4}{c|}{2} & \multicolumn{4}{c}{3} \\ 
         
        \cmidrule(l{0mm}r{0mm}){1-1}
        \cmidrule(l{1.0mm}r{1.0mm}){2-5}
        \cmidrule(l{1.0mm}r{0mm}){6-9}
        
        $d_m$ &  {\small 3} & {\small 5} & {\small 10} & {\small 15} & {\small 3} & {\small 5} & {\small 10} & {\small 15} \\
        \midrule

        \propose & \textbf{2.72} & \textbf{29.67} & \textbf{75.64} & \textbf{91.37} & \textbf{12.67} & \textbf{113.96} & \textbf{190.96} & \textbf{185.57} \\
        \midrule

        TVGL & 1.96 & 2.28 & 2.24 & 2.20 & 2.36 & 2.35 & T.O. & T.O. \\
        
        \bottomrule
    \end{tabular}
    }

    \label{tab:TDR}
\end{table}

\begin{figure}[t]
   \centering
  \begin{minipage}{0.31\linewidth}
    \centering
    \includegraphics[width=\linewidth]{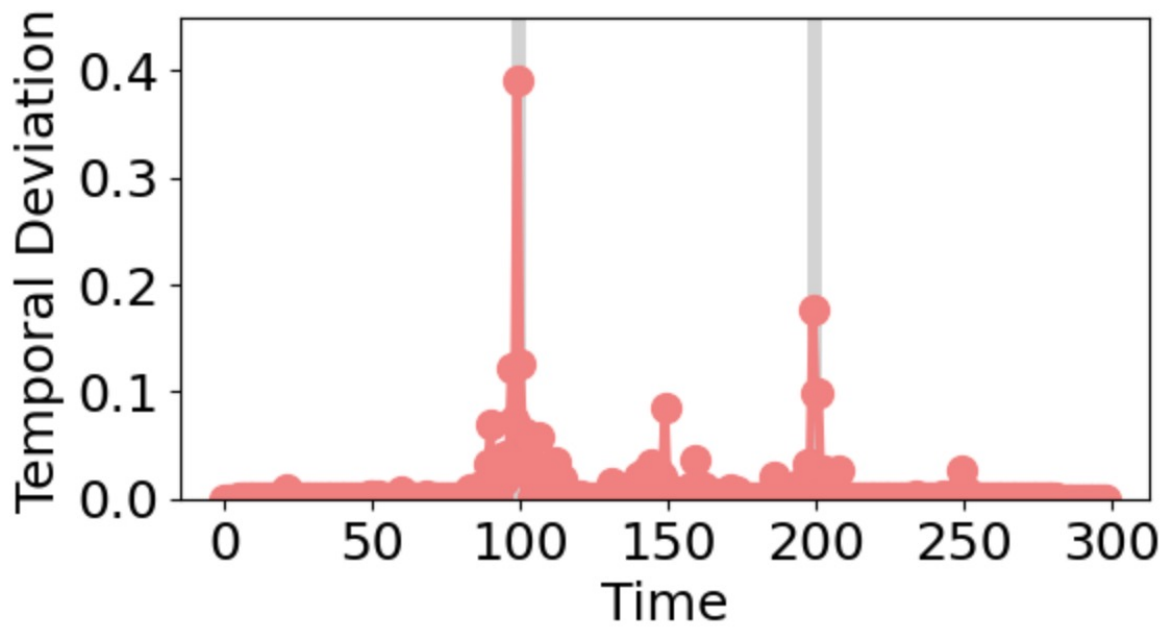}\\
    \includegraphics[width=\linewidth]{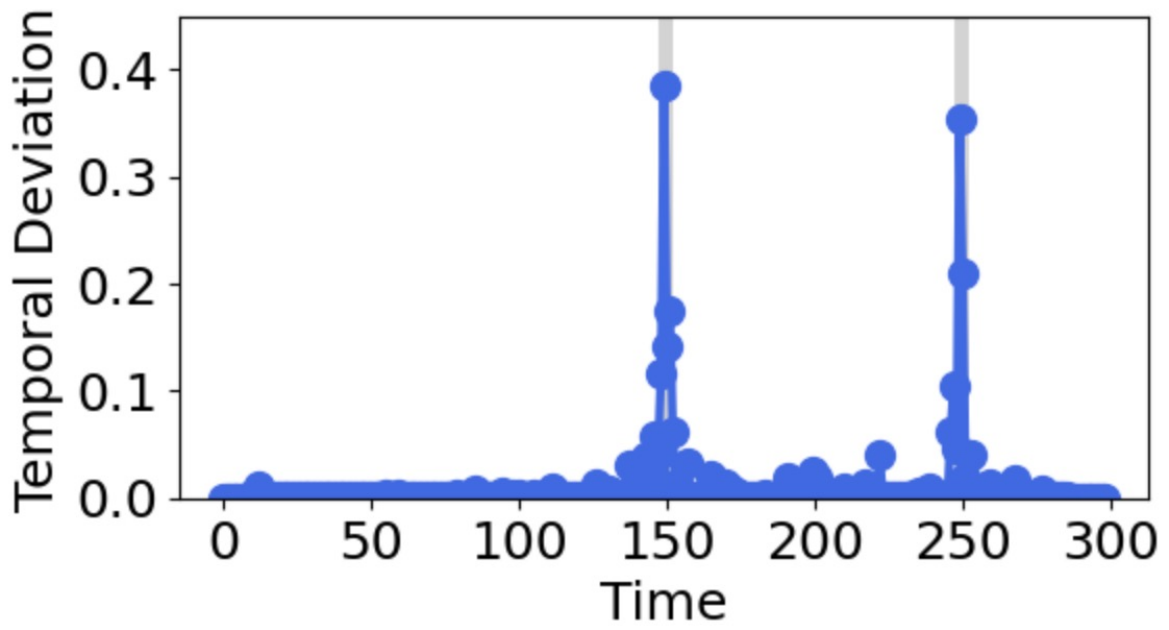}\\
    (a) $M=2$, $d_m = 10$
  \end{minipage}
  \begin{minipage}{0.64\linewidth}
    \centering
    \begin{minipage}{0.48\linewidth}
      \centering
      \includegraphics[width=\linewidth]{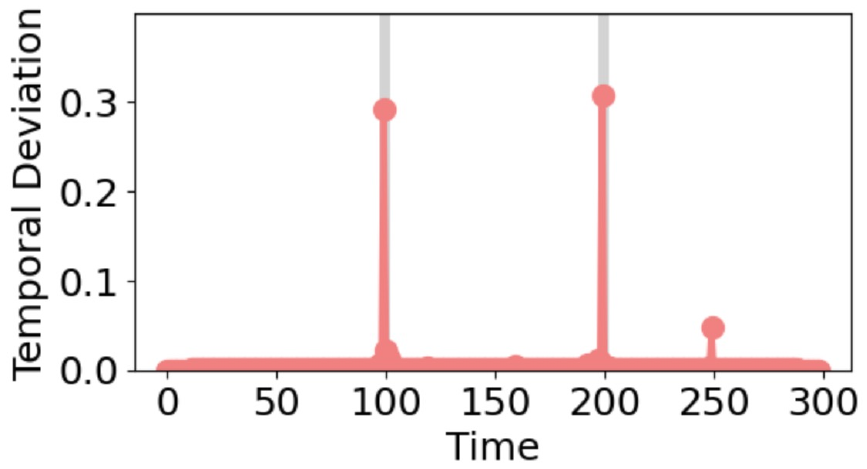}
    \end{minipage}
    \begin{minipage}{0.48\linewidth}
      \centering
      \includegraphics[width=\linewidth]{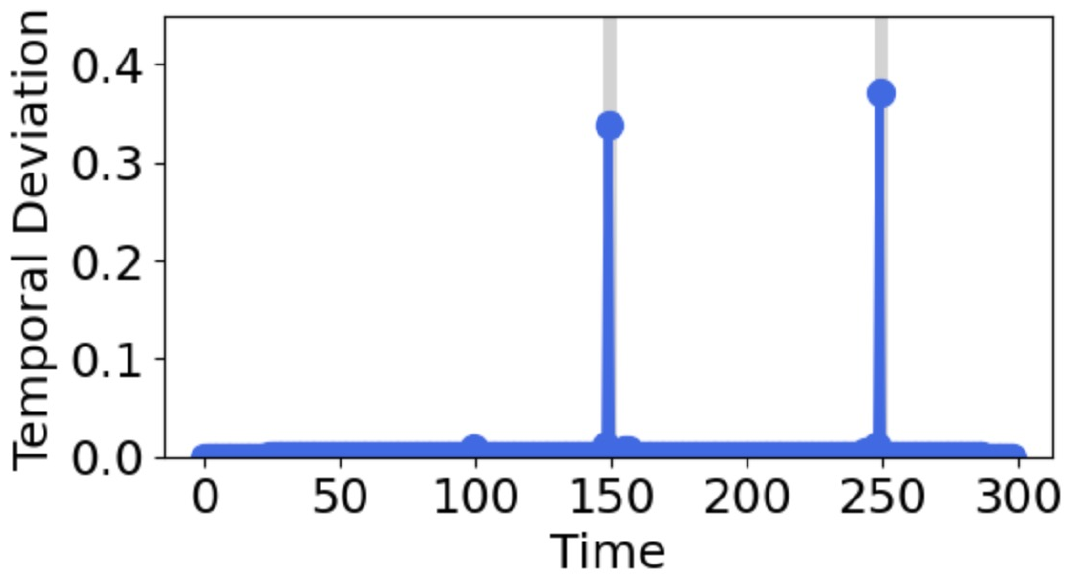}
    \end{minipage}\\
    \begin{minipage}{0.48\linewidth}
      \centering
      \includegraphics[width=\linewidth]{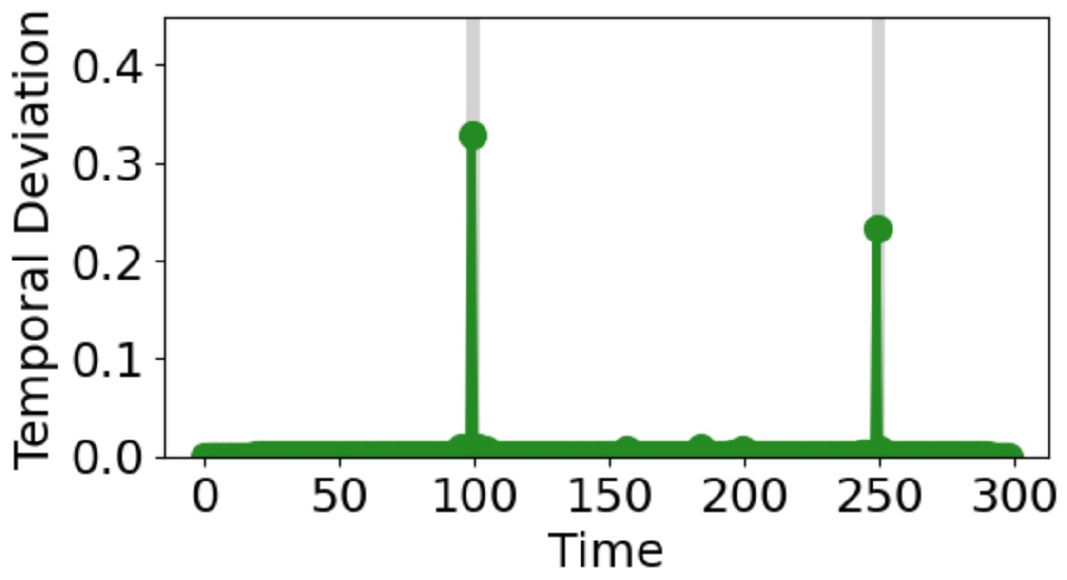}
    \end{minipage}
    \begin{minipage}{0.48\linewidth}
      \centering
      \includegraphics[width=\linewidth]{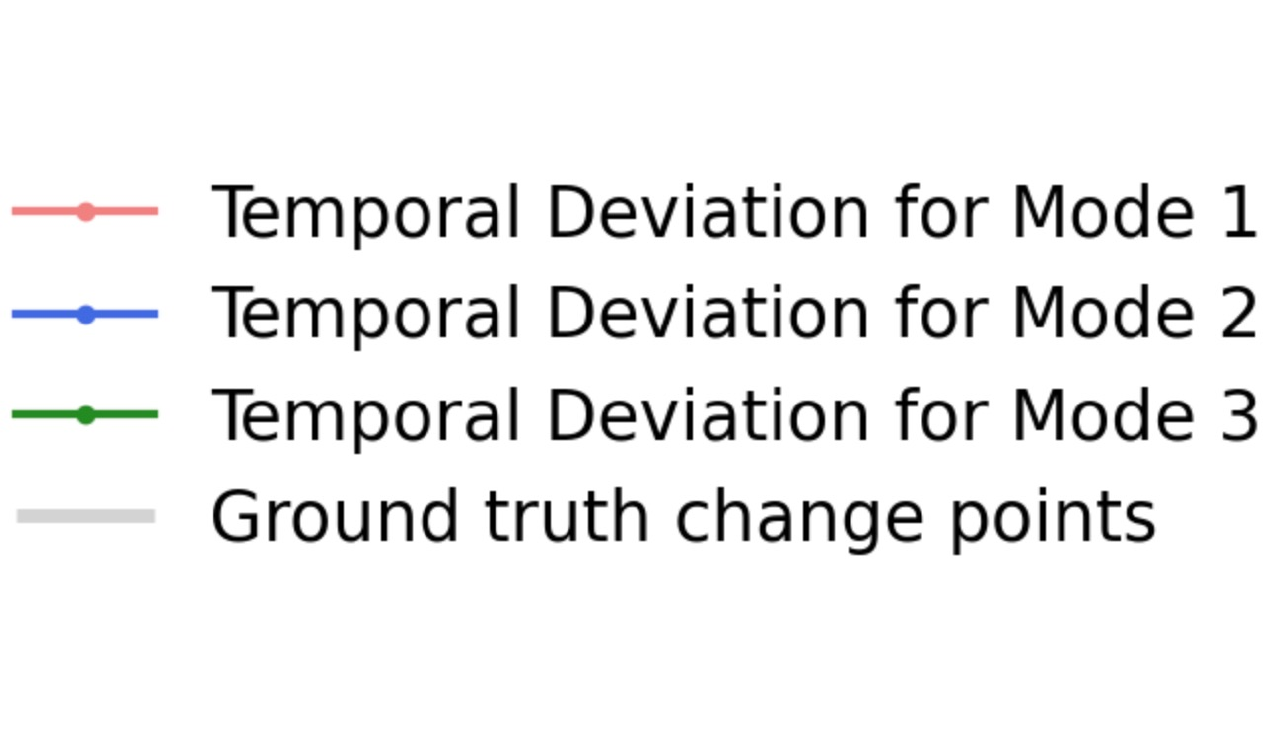}
    \end{minipage}\\
    (b) $M=3$, $d_m = 15$
  \end{minipage}
   
   \caption{Temporal deviation of the estimated networks for datasets with different values of $M$, $d_m$, and ground truth network change points.
   The gray vertical lines indicate the true change points.
   Our method captures these change points across various settings and additionally identifies mode-specific changes.}
   \label{fig:TD}
\end{figure}
We introduce Temporal Deviation Ratio (TDR) as a metric to measure the accuracy of estimating network change points.
Temporal Deviation (TD) is a value indicating how much the network structure changes from one time point to the next. 
TDR is the ratio of the TD at a true change point to the average TD across the entire sequence. 
In other words, a higher TDR indicates that the network change at the true change point is more pronounced relative to other points, reflecting higher confidence in the detected change point.
As shown in Table \ref{tab:TDR}, our method achieves significantly higher TDR scores than the baseline across all datasets.
As the dimensionality of the data increases, more structural information becomes available (see the discussion in Section \ref{subsubsec:empirical}), and thus the accuracy of change point detection tends to be more stable at higher dimensions.
Note that Static KGL is a method for static network inference and is therefore not included in the table.
Additionally, Figure \ref{fig:TD} visualizes TD of the estimated networks and the ground truth change points.
This dataset includes scenarios where multiple networks change simultaneously and scenarios where only one network changes.
Our method accurately estimated the changes in network structure per mode in both scenarios by untangling the seemingly intertwined interactions within the tensor data.
For example, consider the results shown in Figure \ref{fig:TD} (b).
Here, the number of modes $M = 4$, and the ground truth change points are $t^{(1)} = \{100, 200\}$ for mode-1, $t^{(2)} = \{150, 250\}$ for mode-2, and $t^{(3)} = \{100, 250\}$ for mode-3.
In this case, the change points for the entire tensor time series are $t^{(1)} \cup t^{(2)} \cup t^{(3)} = \{ 100, 150, 200, 250\}$, resulting in four change points.
As shown in Figure \ref{fig:TD} (b), \propose correctly distinguishes network changes across modes and accurately identifies which change point originates from which mode.
In other words, our model structure and alternating optimization algorithm leverage the structural information within tensor data, enabling precise mode-specific change point detection that is infeasible for baseline methods.

\subsubsection{Q3: Scalability}
\begin{figure}[t]
   \begin{tabular}{lll}
     \begin{minipage}[b]{0.3\linewidth}
         \centering
         \includegraphics[width=\linewidth]{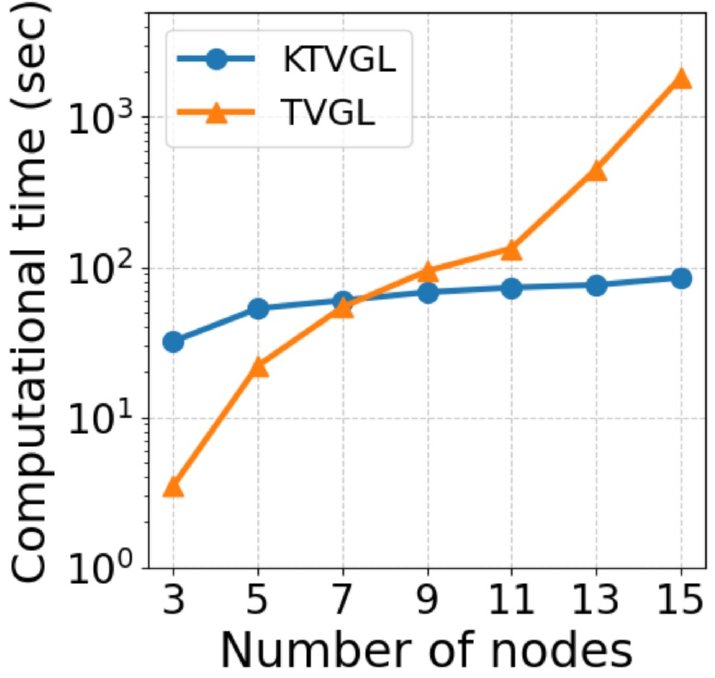}
         (a) vs. $d_m$ ($M=2$)
     \end{minipage}
     &
     \begin{minipage}[b]{0.3\linewidth}
         \centering
         \includegraphics[width=\linewidth]{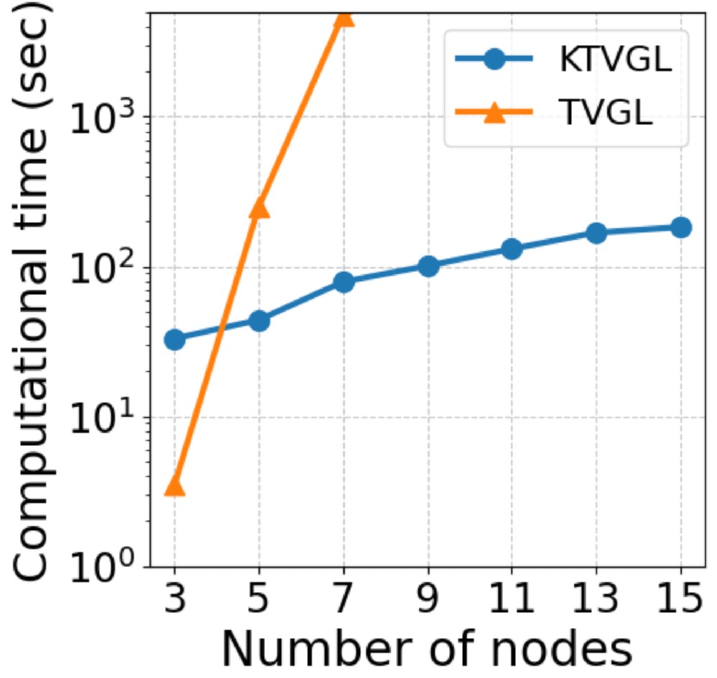}
         (b) vs. $d_m$ ($M=3$)
     \end{minipage}
     &
     \begin{minipage}[b]{0.3\linewidth}
         \centering
         \includegraphics[width=\linewidth]{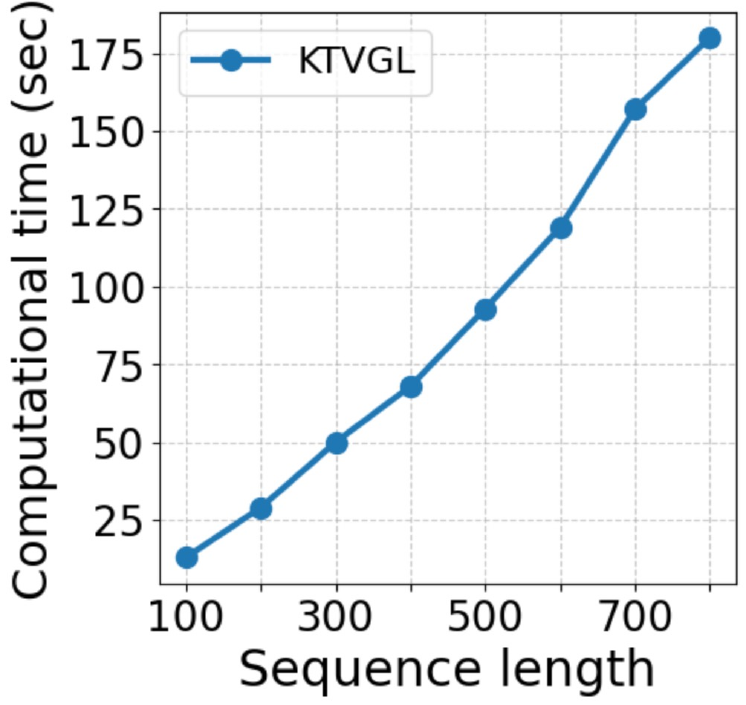}
         (c) vs. $T$
     \end{minipage}
   \end{tabular}
   \caption{Scalability of \propose : 
   (a, b) Wall clock time vs. number of nodes per mode ($d_m$) for number of mode $M = 2,3$. 
   While the computational time of TVGL increases rapidly with $d_m$ and $M$, \propose avoids the exponential growth in computational complexity. 
   (c) Wall clock time vs. sequence length $T$. \propose scales linearly with sequence length.}
   \label{fig:scalability}
\end{figure}
Figure \ref{fig:scalability} shows the wall-clock time of \propose and TVGL.
Figure \ref{fig:scalability} (a) and (b) illustrate the change in computation time for KTVGL and TVGL when varying the number of modes $M$ and dimensions $d_m$.
Note that Static KGL, which performs single static network inference, is very fast (less than 1.0 second), therefore, it is not included in the figure.
TVGL suffers from a significant increase in computation time with growing input data dimensions $d_m$ and number of modes $M$.
This hinders the application of TVGL to tensor time series modeling.
\propose, on the other hand, avoids the exponential increase in computation time due to increases in the number of modes or dimensionality, thanks to its partitioned network structure and efficient alternating optimization framework.
In addition, Figure \ref{fig:scalability} (c) shows the computation time of \propose as the sequence length $T$ varies.
As described in Section \ref{subsubsec:scalability}, \propose scales linearly in terms of the sequence length.

\subsubsection{Performance of stream algorithm}

\begin{figure}[t]
   \begin{tabular}{ll}
     \begin{minipage}[b]{0.46\linewidth}
         \centering
         \includegraphics[width=\linewidth]{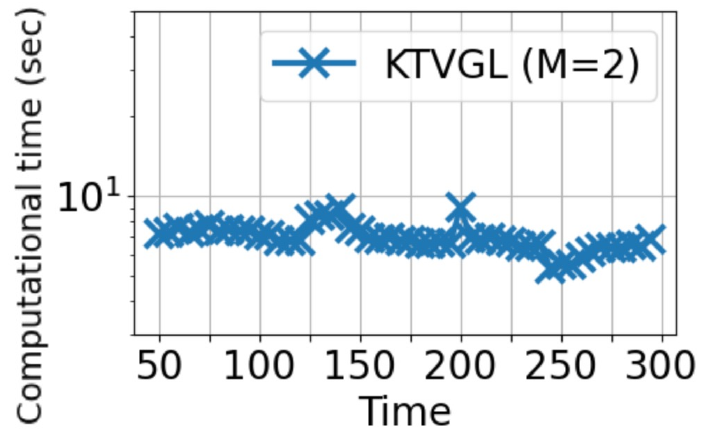}
     \end{minipage}
     &
     \begin{minipage}[b]{0.46\linewidth}
         \centering
         \includegraphics[width=\linewidth]{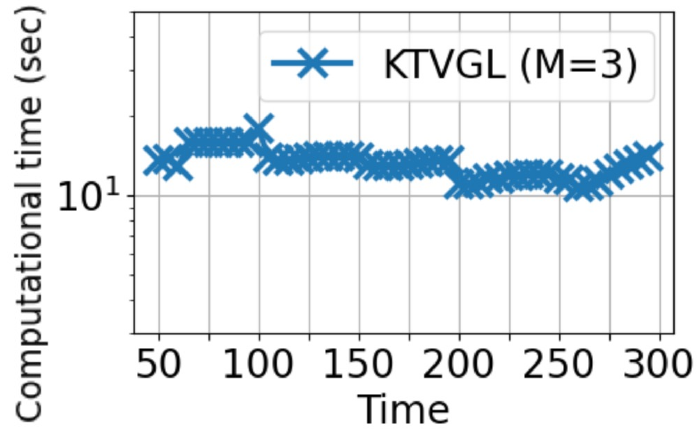}
     \end{minipage}
   \end{tabular}
   \caption{Scalability of \stream: Wall clock time vs. sequence length.
   The computational time at each step is independent of the sequence length.
   We set $d_m=5$, and $M=2$ (left) and $M=3$ (right) respectively.}
   \label{fig:scalability_stream}
\end{figure}

\begin{table}[!t]
    \centering
    \caption{\stream performance.}
    \scalebox{0.8}{
    \begin{tabular}{c|cccc|cccc}
        \toprule
         $M$ & \multicolumn{4}{c|}{2} & \multicolumn{4}{c}{3} \\ 
         \midrule
     $d_m$ & {\small 3} & {\small 5} & {\small 10} & {\small 15} & {\small 3} & {\small 5} & {\small 10} & {\small 15} \\
         \midrule
         \textit{AUC-ROC} & 0.945 & 0.953 & 0.935 & 0.891 & 0.975 & 0.991 & 0.975 & 0.973 \\
         \textit{AUC-PR} & 0.754 & 0.753 & 0.736 & 0.602 & 0.866 & 0.875 & 0.719 & 0.605 \\
         \textit{Best-$F_1$} & 0.698 & 0.680 & 0.660 & 0.578 & 0.767 & 0.784 & 0.651 & 0.564 \\
         \textit{TDR} & 3.42 & 5.83 & 4.59 & 3.04 & 10.73 & 14.19 & 8.81 & 5.66 \\
        \bottomrule
    \end{tabular}
    }

    \label{tab:accuracy_stream}
\end{table}
We also evaluated the performance of \stream, an extension of \propose to a streaming algorithm.
Table \ref{tab:accuracy_stream} shows the accuracy of edge estimation.
By sliding the window, we continuously monitored the latest value within the window and calculated the accuracy between the estimated network and the ground truth network.
\stream exhibited a slight decrease in edge estimation accuracy and a substantial decrease in change point detection performance compared to \propose.
This is due to our continuous monitoring of the latest value within the window.
In the \propose algorithm, the temporal consistency constraint function considers both values before and after the current time. 
However, in the stream algorithm, future values are unobserved, so the current network must be estimated based only on past information.
Nevertheless, its performance remains higher than that of TVGL.
Figure \ref{fig:scalability_stream} shows the computational time of \stream.
The update time of \stream at each step is less than 10 seconds for the case $M=2$ and less than 20 seconds for the case $M=3$, faster than \propose (i.e., Figure \ref{fig:scalability}).
As described in Section \ref{subsubsec:stream}, our stream algorithm uses a sliding window, making computation time independent of sequence length.
Furthermore, thanks to incremental updates, the computation at each step becomes faster.
It is noteworthy that despite its efficiency, \stream still achieves high accuracy in both edge estimation and change point detection.

\section{Case Study}

Many real-world datasets co-evolve through the interaction of multiple series \cite{EcoWeb}.
Estimating these interactions from tensor time series reveals latent structures within the data, providing critical insights for numerous tasks such as marketing and user behavior analysis.
As demonstrated in Section \ref{sec:experiments}, our method achieved outstanding dynamic network inference accuracy in the common time series analysis scenario where one observation is obtained per time point.
In this section, we present several case studies applying KTVGL to analyze real-world tensor time series.

\subsection{Real-world dataset}
Here, we use web search volume data obtained from GoogleTrends \cite{googletrend}.
Specifically, this data represents weekly web search volumes for multiple keywords across multiple regions from 2015 to 2020. 
The data forms a tensor time series consisting of three modes: \textit{(time, keyword, location)}.
This data is considered to reflect the strength of public interest in various keywords, and understanding dependencies between keywords and across regions is crucial for understanding users.
For this study, we selected multiple video services as keywords and prepared two datasets: one representing web search volumes by country (referred to as \textit{country data}) and another representing search volumes by U.S. state (referred to as \textit{region data}).
The data size and included keywords are listed in Table \ref{tab:dataset} in Appnedix \ref{appendix:casestudy}.
We applied moving average smoothing to mitigate the effects of seasonal fluctuations and then normalized all data.

\subsection{Modeling results}
\subsubsection{Country data}

\begin{figure}[t]
    \centering
    \includegraphics[width=\linewidth]{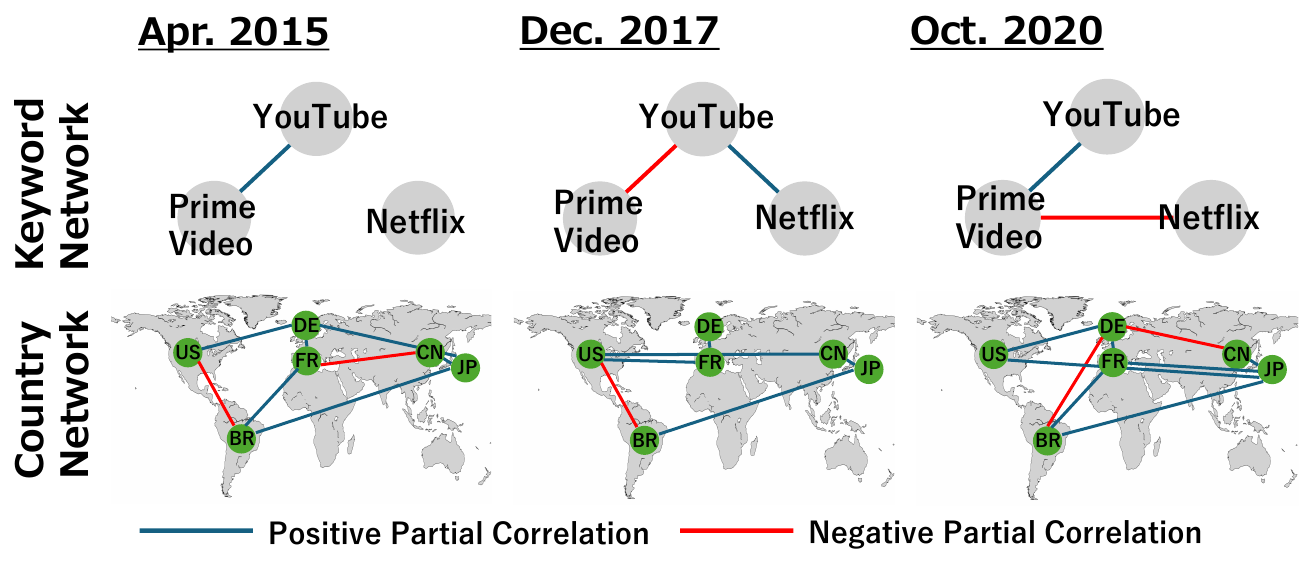}
    \caption{Snapshots of the time-varying networks estimated by KTVGL at three different time points for GoogleTrends \textit{country data}.
    The upper panels represent keyword networks, while the lower panels represent country networks.
    Our multi-network structure provides interpretable modeling results for each non-temporal mode.
    }
    \label{fig:vod}
\end{figure}
Figure \ref{fig:vod} shows modeling results for \textit{country data}.
The upper panel displays the keyword dependency networks, while the lower panel shows the country dependency networks.
The left, center, and right figures present the estimated results for 2015, 2017, and 2020, respectively.
Network edges represent partial correlations between nodes, i.e., the correlation between two variables after filtering out the influence of other variables.
Focusing on the keyword network, ``Prime Video'' and ``YouTube'' showed a positive partial correlation in 2015.
Amazon launched its ``Prime Video'' service globally in 2015.
The observed correlation between “Prime Video” and “YouTube” suggests that, during this period, ``Prime Video'' was also gaining public attention in a manner similar to other major video platforms.
Furthermore, a positive partial correlation was observed between ``YouTube'' and ``Netflix'' in 2017.
By that time, Netflix had already launched its global video-on-demand service.
This correlation suggests that, as the demand for online video services continued to grow, public attention toward ``Netflix'' and ``YouTube'' may have increased concurrently.
In addition, interestingly, a negative correlation was estimated between ``Netflix'' and ``Prime Video'' in 2020.
In 2020, demand for video-on-demand services surged rapidly due to factors like pandemic lockdowns.
During this period, ``Prime Video'' and ``Netflix'' are presumed to have been competing to acquire users.
In other words, a potential competitive relationship existed between these two values: increased attention to one corresponded to decreased interest in the other.
This competitive relationship can be interpreted as a negative correlation estimated after controlling for the effects of other variables.
Focusing on country networks, while networks change over time, countries within the same area---Germany and France, Japan and China---exhibit positive partial correlations across all periods.
We can analyze that neighboring countries in Europe and Asia interact and share related time-series patterns.
Consequently, our method can summarize real-world circumstances as multiple dependent networks corresponding to distinct non-temporal modes, enabling effective interpretation of underlying interactions.

\begin{figure}[t]
    \centering
    \includegraphics[width=\linewidth]{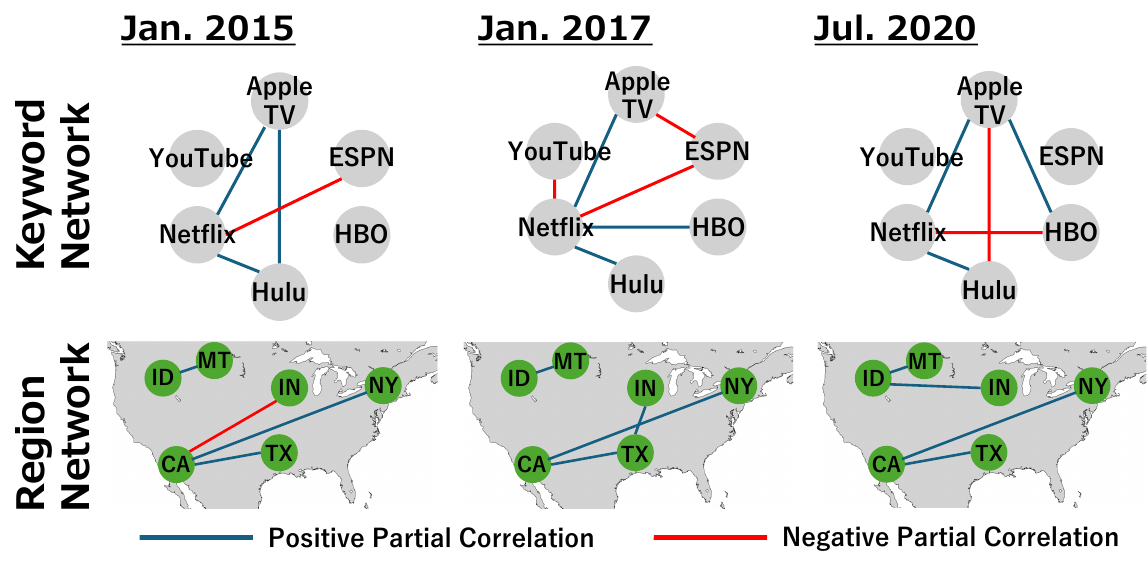}
    \caption{Snapshots of the time-varying networks estimated by KTVGL at three different time points for GoogleTrends \textit{region data}.
    }
    \label{fig:vodRegion}
\end{figure}
\subsubsection{Region data}
Figure \ref{fig:vodRegion} shows modeling results for \textit{region data}.
Focusing on the keyword network for 2015, it indicates that keywords related to subscription-based video-on-demand services---``Netflix'', ``Hulu'', and ``Apple TV''---exhibit positive correlations among each other.
In 2017, while maintaining the relationships observed in the previous snapshot, several new edges were added.
As these edges are primarily connected to ``Netflix'', it can be interpreted as a phase where ``Netflix'' became central, reshaping the competitive landscape of the video service industry.
Furthermore, significant network changes were also observed between 2017 and 2020.
Specifically, the value of the partial correlation between ``Netflix'' and ``HBO'' has shifted from positive to negative.
``HBO'' launched its subscription-based video streaming service in 2020, meaning it became a direct competitor to ``Netflix''.
This negative correlation may reflect competition for user acquisition.
Additionally, Apple launched its new video streaming service in 2019, which may have contributed to the network changes observed here, specifically, the negative correlation between ``Hulu'' and ``Apple TV''.
Additionally, focusing on region networks, positive correlations were observed throughout all periods between California (CA) and Texas (TX), California and New York (NY), and Idaho (ID) and Montana (MT).
This can be interpreted as clusters of urban areas (CA, TX, NY) and rural areas (ID, MT).
These correlations suggest that regions sharing similar socioeconomic environments tend to exhibit comparable temporal trends in public attention.

As shown by the two case studies above, \propose serves as a valuable tool for providing insightful analysis of real-world data through its interpretable multi-network dynamic modeling.

\section{Related Work}

\mypara{Graphical model}
As described in Section \ref{sec:Preliminaries}, graphical lasso \cite{GLasso, GLasso2, GLasso3} is a method for estimating static sparse dependency networks from multivariate data, and numerous extensions and applications have been explored \cite{FinanceGL, BrainGL}.
Extending this framework to multivariate time series data is a natural progression, as latent interactions in real-world data are typically time-varying, thereby motivating the study of dynamic network inference.
Time-Varying Graphical Lasso (TVGL) \cite{TVGL} was proposed as a method specifically designed to estimate time-varying sparse dependency networks. 
It has been recognized for enabling scalable and accurate dynamic network inference, which has in turn inspired a wide range of applications \cite{CovidTVGL, SpaTempTVGL} and further extensions (e.g., Latent-Variable TVGL) \cite{LTVGL, NonparaTVGL, KernelTVGL}.
However, applying TVGL to tensor time series, which are frequently observed in the real world, requires flattening the input tensor into a multivariate time series, leading to large and entangled networks.
To address this limitation, our method directly estimates mode-specific dynamic networks, thereby enhancing both interpretability and scalability.
The idea of imposing a Kronecker structure on graphical lasso is based on \cite{KGL}.
However, it assumes modeling multidimensional time series and cannot be directly applied to tensor time series.
Furthermore, it cannot estimate time-varying networks.
In contrast, our method is designed specifically for modeling tensor time series.
We utilize Kronecker theory and tensor geometry to formulate alternating optimization for tensor data (i.e., Lemma 1 and Eq. (\ref{eq:obj_mode})).
This enables dynamic inference of each network, thereby providing new insights such as capturing interactions focused solely on specific modes and detecting change points focused exclusively on particular modes.
Beyond network inference, graphical lasso has also been applied to various time series tasks, including clustering \cite{TICC, DMM}, missing value imputation \cite{MissNet, MMGL}, and forecasting \cite{TAGM}.
Poisson graphical models for modeling count data have been proposed in addition to Gaussian-based networks \cite{PoissonGM,PoissonGM2,PoissonGM3}.

\mypara{Tensor time series analysis}
Tensor decomposition methods \cite{Kolda, OnlineCP}, including CP and Tucker decompositions, are powerful techniques for analyzing tensor data.
These methods extend matrix decomposition \cite{SVDPCA,NMF} to higher dimensions, extracting latent patterns and structures by decomposing the input tensor into multiple smaller factors.
They are also widely used for analyzing tensor time series, serving not only for pattern extraction \cite{SMF,SSMF,AutoCyclone} but also for anomaly detection \cite{SliceNStitch, ZoomTucker, Dash} and clustering \cite{CubeScope}.
Their application domains are diverse, spanning medical \cite{spartan, dpar2}, finance \cite{Dash}, web data \cite{Dismo,DTracker}, and IoT sensor data \cite{Facets}.
Various extensions have been proposed to accommodate different data properties, such as irregular tensors \cite{Parafac2}, sparsity \cite{spartan}, and non-negativity \cite{NTD}.
Tensor decomposition methods directly decompose data without flattening it into multivariate time series, enabling interpretable summaries that preserve structural information and correlations between modes.
However, while they can capture interactions between modes, they cannot explicitly model correlations between variables within the same mode.
Our method represents a distinct approach from tensor decomposition techniques in that it captures interactions within the same mode.

\section{Conclusion}

In this paper, we propose Kronecker Time-Varying Graphical Lasso (KTVGL), 
designed for modeling tensor time series.
Our approach avoids large, entangled networks by estimating mode-specific networks corresponding to each non-temporal mode. 
This prevents exponential increases in computational complexity and leads to interpretable modeling results.
Our model structure adopts a Kronecker structure for the multi-network, enabling the optimization problem for the multi-network to be decomposed into alternating convex optimization problems for each mode-specific network.
This design allows the model to effectively and efficiently capture dependencies inherent in tensor time series.
In addition, our method can be extended to stream algorithms, making the computational time independent of the sequence length.
Experiments on synthetic data show that our approach improves edge estimation accuracy by up to 73.5\% based on \textit{AUC-ROC} and is up to 60.5 times faster than existing methods.
Furthermore, case studies using real-world datasets demonstrated the effectiveness of our approach.

\section*{acknowledgement} 
This work was partly supported by
JST BOOST, Japan Grant Number JPMJBS2402, 
“Program for Leading Graduate Schools” of the Osaka University, Japan, 
JSPS KAKENHI Grant-in-Aid for Scientific Research Number JP20910053,
JST CREST JPMJCR23M3,
JST START JPMJST2553,
JST CREST JPMJCR20C6,
JST K Program JPMJKP25Y6,
JST COI-NEXT JPMJPF2009,
JST COI-NEXT JPMJPF2115,
the Future Social Value Co-Creation Project - Osaka University,
JSPS KAKENHI Grant-in-Aid for Scientific Research Number JP25K21208. 

\bibliographystyle{ACM-Reference-Format}
\balance
\bibliography{%
BIB/math,
BIB/graphical,
BIB/application,
BIB/tensor
}

\appendix
\section*{Appendix}
\section{Problem definition and notation}
\label{appendix:notation}

\begin{table}[h]
    \centering
    \caption{Symbols and definitions}
    \scalebox{0.77}{
    \begin{tabular}{l|l}
        \toprule
        Symbol & Definition \\ \midrule
        $d_m$ & Number of variables at mode-$m$ \\
        $M$ & Number of non-temporal modes \\
        $T$ & Sequence length \\
        $\X$ & Input tensor time series, i.e., $\X \in \R^{T \times d_1 \times \dots \times d_M}$ \\ 
        $\X_t$ & Observed data at time $t$, i.e., $\X_t \in \R^{d_1 \times \cdots \times d_M}$ and $\X = \{ \X_{1}, \dots, \X_{T} \}$ \\ 
        $D$ & Product of $d_m$ across all modes, i.e., $D = \Pi_{m=1}^{M}d_m$ \\
        $\Dminus$ & $D$ excluding mode-$m$, i.e., $\Dminus = \Pi_{l \neq m} d_l$ \\
        \midrule
        $\Theta_{t}^{(m)}$ & Dependency network for mode-$m$ at time $t$, i.e., $\Theta_{t}^{(m)} \in \R^{d_m \times d_m}$ \\
        $\Theta_t$ & Multi-network at time $t$, i.e., $\Theta_t = \{ \Theta_{t}^{(1)}, \ldots \Theta_{t}^{(M)} \}$ \\
        $K_t$ & Kronecker product of all mode-specific networks, i.e.,  $K_t = \otimes_{m=1}^{M} \Theta_{t}^{(m)}$ \\
        $\hat{S}_{t}$ & Empirical covariance at time $t$ \\
        $\hat{S}_{t}^{(m)}$ & Empirical covariance for mode-$m$ at time $t$ \\
        $\psi$ & Time-consistency constraint function \\
        $\lambda_m$ & Hyperparameter for $\ell_1$ regularization in mode-$m$ \\
        $\rho_m$ & Hyperparameter for time-consistency constraint in mode-$m$ \\
        \bottomrule
    \end{tabular}
    }
    \label{tab:notation}
\end{table}
Table \ref{tab:notation} lists the main symbols that we use throughout this paper.

\section{Experiments}
\subsection{Hyperparameter setting}
\label{appendix:hyperparameter}
For all methods, we estimated the regularization parameters $\lambda$ and $\rho$ via grid search using separately generated evaluation datasets.
Specifically, we set $\{ 0.01, 0.03, 0.05 \}$ as the $\ell_1$-regularization parameter $\lambda$ and $\{ 1.0, 1.5, 2.0 \}$ as the penalty function parameter $\rho$.
For the time-consistency constraint function $\psi$, we adopt the Laplacian penalty for edge estimation.
For change point detection, we adopt the $\ell_1$-penalty, which imposes a strong penalty on temporal changes of networks and is suitable for cases where a small number of edges change significantly.

\section{Case Study}
\subsection{Real-world dataset}
\label{appendix:casestudy}
\begin{table}[h]
    \centering
    \caption{GoogleTrends dataset description}
    \scalebox{0.85}{
    \begin{tabular}{c|c|c}
        \toprule
        Name & Keywords & Data size \\ 
        \midrule
        
        \textit{Video (Country)} & AppleTV/ESPN/HBO/Hulu/Netflix/YouTube & $(313, 6, 6)$ \\
        \midrule
    
        \textit{Video (Region)} & PrimeVideo/Netflix/YouTube & $(313,3,6)$ \\
        \bottomrule
    \end{tabular}
    }
    \label{tab:dataset}
\end{table}
The data size and included keywords are listed in Table \ref{tab:dataset}.
This data represents weekly web search volumes for multiple keywords across multiple regions from 2015 to 2020. 
The data forms a tensor time series consisting of three modes: \textit{(time, keyword, location)}.
\textit{Country} data includes web search volumes for each country, while \textit{region} data includes web search volumes for each state in the United States.

\end{document}